\newcommand{\A}{\ensuremath{\mathbf{A}}}
\newcommand{\B}{\ensuremath{\mathbf{B}}}
\newcommand{\C}{\ensuremath{\mathbf{C}}}
\newcommand{\D}{\ensuremath{\mathbf{D}}}
\newcommand{\HH}{\ensuremath{\mathbf{H}}}
\newcommand{\I}{\ensuremath{\mathbf{I}}}
\newcommand{\K}{\ensuremath{\mathbf{K}}}
\newcommand{\LL}{\ensuremath{\mathbf{L}}}
\newcommand{\PP}{\ensuremath{\mathbf{P}}}
\newcommand{\Q}{\ensuremath{\mathbf{Q}}}
\newcommand{\RR}{\ensuremath{\mathbf{R}}}
\renewcommand{\SS}{\ensuremath{\mathbf{S}}}
\newcommand{\T}{\ensuremath{\mathbf{T}}}
\newcommand{\U}{\ensuremath{\mathbf{U}}}
\newcommand{\V}{\ensuremath{\mathbf{V}}}
\newcommand{\W}{\ensuremath{\mathbf{W}}}
\renewcommand{\aa}{\ensuremath{\mathbf{a}}}
\renewcommand{\b}{\ensuremath{\mathbf{b}}}
\renewcommand{\c}{\ensuremath{\mathbf{c}}}
\newcommand{\uu}{\ensuremath{\mathbf{u}}}
\newcommand{\vv}{\ensuremath{\mathbf{v}}}
\newcommand{\w}{\ensuremath{\mathbf{w}}}
\newcommand{\x}{\ensuremath{\mathbf{x}}}
\newcommand{\y}{\ensuremath{\mathbf{y}}}
\newcommand{\0}{\ensuremath{\mathbf{0}}}
\newcommand{\1}{\ensuremath{\mathbf{1}}}
\newcommand{\balpha}{\ensuremath{\boldsymbol{\alpha}}}
\newcommand{\bbeta}{\ensuremath{\boldsymbol{\beta}}}
\newcommand{\bPhi}{\ensuremath{\boldsymbol{\Phi}}}
\newcommand{\bSigma}{\ensuremath{\boldsymbol{\Sigma}}}
\newcommand{\bbR}{\ensuremath{\mathbb{R}}}
\newcommand{\calH}{\ensuremath{\mathcal{H}}}
\newcommand{\calN}{\ensuremath{\mathcal{N}}}
\newcommand{\calO}{\ensuremath{\mathcal{O}}}
\newcommand{\abs}[1]{\left\lvert#1\right\rvert}
\newcommand{\norm}[1]{\left\lVert#1\right\rVert}
\newcommand{\traceop}{\operatorname{tr}}
\newcommand{\trace}[1]{\ensuremath{\traceop\left(#1\right)}}
\newtheorem{thm}{Theorem}
\newtheorem{lem}{Lemma}
\newcommand{\ie}{i.e.}
\newcommand{\eg}{e.g.}
\newcommand{\pr}[1]{\text{Pr}\left( #1 \right)}
\title{On Column Selection in Approximate Kernel Canonical Correlation Analysis}
\author{Weiran Wang \\
  Toyota Technological Institute at Chicago\\
  6045 S Kenwood Ave., Chicago IL 60637 \\
  \texttt{weiranwang@ttic.edu}}
\begin{document}
\maketitle

\begin{abstract} 
  We study the problem of column selection in large-scale kernel canonical correlation analysis (KCCA) using the Nystr\"om approximation, where one approximates two positive semi-definite kernel matrices using ``landmark'' points from the training set. When building low-rank kernel approximations in KCCA, previous work mostly samples the landmarks uniformly at random from the training set. We propose novel strategies for sampling the landmarks non-uniformly based on a version of statistical leverage scores recently developed for kernel ridge regression. We study the approximation accuracy of the proposed non-uniform sampling strategy, develop an incremental algorithm that explores the path of approximation ranks and facilitates efficient model selection, and derive the kernel stability of out-of-sample mapping for our method. Experimental results on both synthetic and real-world datasets demonstrate the promise of our method.
\end{abstract} 

\section{Introduction}
\label{s:intro}

Kernel canonical correlation analysis (KCCA, \citealp{LaiFyfe00a,Akaho01a,Melzer_01a,BachJordan02a,Hardoon_04a}) is the kernel extension of the classic canonical correlation analysis (CCA, \citealp{Hotell36a}) algorithm, and has found wide application for analyzing multi-view data in different areas, such as computer vision~\citep{SocherLi10a}, speech recognition~\citep{Rudzic10a,AroraLivesc12a}, natural language processing~\citep{Vinokour_03a,LiShawe-Taylor05a,Hardoon_07a,Hodosh_13a}, computational biology~\citep{Yamanis_04a,Hardoon_07a,Blasch_11a}, and statistics~\citep{BachJordan02a,Fukumiz_07a,Lopez_13a}.

The main idea of CCA is to \emph{linearly} project two random vectors into a lower-dimensional subspace so that the projections are maximally correlated. To extend CCA to nonlinear mappings with greater representation power, KCCA first maps the input observations of each view into Reproducing Kernel Hilbert Spaces (RKHS) and then applies linear CCA in the RKHS. Formally, given a dataset of $N$ pairs of observations ${(\x_1,\y_1),\dots,(\x_N,\y_N)}$ of the random vectors $(\x,\y)$, where $\x_i\in \bbR^{d_x}$ and $\y_i\in \bbR^{d_y}$ for $i=1,\dots,N$, KCCA maps the inputs into $\phi(\x_i)=k_x(\x_i,\cdot)\in \calH_x$ and $\psi(\y_i)=k_y(\y_i,\cdot)\in \calH_y$, where $k_x$ and $k_y$ are positive definite kernels associated with RKHS spaces $\calH_x$ and $\calH_y$ respectively, and then solves the following \emph{regularized} objective
\begin{align}\label{e:kcca}
& \max\limits_{f \in \calH_x,g \in \calH_y} \left<f, \bSigma_{xy} g\right>_{\calH_x} \\
\text{s.t.}\quad & \left<f, \bSigma_{xx} f\right>_{\calH_x} + 2 \lambda_1 \norm{f}^2_{\calH_x}=1 \nonumber \\
& \left<g, \bSigma_{yy} g\right>_{\calH_y} + 2 \lambda_2 \norm{g}^2_{\calH_y}=1 \nonumber
\end{align}
where $(\lambda_1,\lambda_2)>0$ are regularization parameters that help avoid trivial solutions (and enable matrix inversions), and the cross- and auto- covariance operators are defined as
\begin{align*}
\left<f, \bSigma_{xy} g\right>_{\calH_x}&=\frac{1}{N} \sum\nolimits_{i=1}^N \left<f, \bar{\phi}(\x_i)\right>_{\calH_x} \left<g, \bar{\psi}(\y_i)\right>_{\calH_y},\\
\left<f, \bSigma_{xx} f\right>_{\calH_x}&=\frac{1}{N} \sum\nolimits_{i=1}^N \left<f, \bar{\phi}(\x_i)\right>_{\calH_x} \left<f, \bar{\phi}(\x_i)\right>_{\calH_x},
\end{align*}
with $\bar{\phi}(\x_i) = \phi(\x_i) - \frac{1}{N} \sum_{j=1}^N \phi(\x_j)$ and $\bar{\psi}(\y_i)= \psi(\y_i)-\frac{1}{N} \sum_{j=1}^N \psi(\y_j)$. From now on, we denote by $\K_1$ and $\K_2$ the \emph{uncentered} kernel (Gram) matrices, \ie, $(\K_1)_{ij}=\left<{\phi}(\x_i), {\phi}(\x_j)\right>$ and $(\K_2)_{ij}=\left<{\psi}(\y_i), {\psi}(\y_j)\right>$, and denote by $\overline{\K}_1 = \HH \K_1 \HH$ and $\overline{\K}_2  = \HH \K_2 \HH$ their {centered} version, where $\HH = \I- \frac{1}{N} \1 \1^\top \in \bbR^{N \times N}$.

Following the derivation of \citet{BachJordan02a}, it suffices to consider $f$ and $g$ that lie in the span of $\bar{\phi}(\x_i)$ and $\bar{\psi}(\y_i)$ respectively, i.e., $f=\sum_{i=1}^N \alpha_i \bar{\phi}(\x_i)$ and $g=\sum_{i=1}^N \beta_i \bar{\psi}(\y_i)$, and the optimal coefficients $\balpha=[\alpha_1,\dots,\alpha_N]^\top$,\ \ $\bbeta=[\beta_1,\dots,\beta_N]^\top  \in \bbR^N$ satisfy the following eigenvalue system\footnote{Note that \citet{BachJordan02a} used approximations of the form $\left<f, \bSigma_{xx} f\right>_{\calH_x}+2 \lambda_1 \norm{f}^2_{\calH_x}=\frac{1}{N} \balpha^\top \overline{\K}_1^2 \balpha + 2\lambda_1 \balpha^\top \overline{\K}_1 \balpha \approx \frac{1}{N} \balpha^\top (\overline{\K}_1 + N \lambda_1 \I)^2 \balpha$, which still lead to a consistent estimate of the kernel canonical correlation.}
\begin{align}\label{e:kcca-eig}
\left[
\begin{array}{cc}
\0 & \T \\
\T^\top & \0
\end{array}
\right]
\left[
\begin{array}{c}
\balpha' \\ \bbeta'
\end{array}
\right]
= \rho 
\left[
\begin{array}{c}
\balpha'\\ \bbeta'
\end{array}
\right],
\end{align}
where $\balpha'=\frac{1}{\sqrt{N}} (\overline{\K}_1 + N \lambda_1 \I) \balpha$,\ \ $\bbeta'=\frac{1}{\sqrt{N}} (\overline{\K}_2 + N \lambda_2 \I) \bbeta$ with $\norm{\balpha'}=\norm{\bbeta'}=1$, and
\begin{gather} \label{e:def-T}
\T=(\overline{\K}_1 + N \lambda_1 \I)^{-1} \overline{\K}_1 \overline{\K}_2 (\overline{\K}_2 + N \lambda_2 \I)^{-1},
\end{gather}
and $\rho$ is the largest eigenvalue of the system and the optimal objective of \eqref{e:kcca}. Alternatively, $\rho$ is the largest singular value of $\T$, while $\balpha'$ and $\bbeta'$ are the corresponding left and right singular vectors. The view 1 projection mapping is 
\begin{gather} \label{e:out-of-sample}
f(\x)=\sum_{i=1}^N \alpha_i \left< \bar{\phi}(\x_i), \bar{\phi}(\x) \right> \quad \in \bbR.
\end{gather} 
$L$-dimensional projections can be obtained by simultaneously extracting the top $L$ eigenvectors of \eqref{e:kcca-eig}.

Although KCCA has a closed-form solution, for applications where the training set size $N$ is large, it is computationally infeasible to evaluate and store the $N\times N$ kernel matrices, let alone compute $\T$ and its rank-$L$ singular value decomposition (SVD) which are of time complexity $\calO(N^3)$ and $\calO(L N^2)$ respectively. As a result, various low-rank kernel approximation techniques have been proposed to alleviate these issues, including incomplete Cholesky decomposition~\citep{BachJordan02a}, partial Gram-Schmidt~\citep{Hardoon_04a}, incremental SVD~\citep{AroraLivesc12a}, and random Fourier features~\citep{Lopez_14b}. Another popular low-rank kernel approximation method is the Nystr\"om approximation~\citep{WilliamSeeger01a,Fowlkes_04a,Kumar_12a,GittenMahoney13a}: By sampling a set of $M$ landmark points from the training set with index set $I=\{i_1,\dots,i_M\}$, a symmetric positive semi-definite (PSD) matrix $\K \in \bbR^{N\times N}$ is approximated as
\begin{gather}\label{e:standard-nystrom}
\K \approx \LL:=\C \W^{\dagger} \C^\top
\end{gather}
where $\C\in \bbR^{N\times M}$ contains the sampled columns of $\K$ indexed by $I$, and $\W\in \bbR^{M\times M}$ is the square sub-matrix of $\K$ with rows and columns both indexed by $I$.

One can build Nystr\"om approximations $\LL_1$ of rank $M_1$ for $\K_1$, and $\LL_2$ of rank $M_2$ for $\K_2$ in KCCA; we denote the resulting algorithm NKCCA. NKCCA typically outperforms approximate KCCA using random Fourier features~\citep{Lopez_14b} for the same rank, even when a naive uniform sampling strategy is adopted.\footnote{This work uses $\C (\W^\dagger)^{1/2}\in \bbR^{N\times M}$ as the new data matrix in each view and run linear CCA on top. One can show this is equivalent to solving an eigenvalue system similar to \eqref{e:kcca-eig} with Nystr\"om kernel approximations, using the argument in \citet[Sec.~4]{Lopez_14b}.}

\paragraph{Our contributions} 
First, we propose a non-uniform sampling strategy for NKCCA, and prove its approximation guarantee in terms of the kernel canonical correlation. We show that, by carefully selecting small fractions of columns of $\K_1$ and $\K_2$ in Nystr\"om approximations, we obtain an NKCCA solution that is an accurate estimate of the solution to exact KCCA, and that to achieve the same accuracy, the proposed strategy may require fewer columns to be sampled than does uniform sampling. This is the first approximation guarantee for NKCCA that we are aware of, despite its common use in practice~\citep{Lopez_14b,Xie_15a,Wang_15b}. Our strategy is  motivated by that of \citet{AlaouiMahoney15a} for kernel ridge regression, who recently showed that if the columns are sampled according to a distribution depending on a version of statistical leverage scores, the statistical performance can be guaranteed when low-rank kernel approximations are used. Notice that CCA is closely related to regression, as its objective is to find a common subspace where the two views are maximally correlated and predictive of each other. %% Furthermore, the kernel ridge regressor has the form $\K(\K+n\lambda \I)^{-1}\y$ which resembles that of matrix $\T$ in the KCCA solution. 
Second, motivated by the algorithm of \citet{Rudi_15a} for kernel ridge regression, we propose an incremental Nystr\"om approximation algorithm for NKCCA, which explores the entire path of approximation ranks and facilitates efficient model selection. While the first two points are motivated by recent work, the column selection problem here is more interesting and challenging since there are two kernel matrices in KCCA.
Third, we derive the kernel stability of NKCCA, \ie, the perturbation of the projection mapping due to kernel approximations used in training, and our approach can be extended to other approximate KCCA algorithms with spectral error bounds in $\T$. As far as we know, this is the first kernel stability result for KCCA.

%% In the following sections we provide our theoretical and algorithmical results (Sec.~\ref{s:algorithm}), discuss related work (Sec.~\ref{s:related}), and give empirical demonstrations (Sec.~\ref{s:expt}).

\section{Column selection in KCCA}
\label{s:algorithm}

\paragraph{Notations} Boldface capital letters denote matrices; boldface lower-case letters denote column vectors; without boldface, lower-case letters denote scalars. $\1$ denotes the column vector of all ones, and $\I$ denotes the identity matrix with appropriate dimensions. $[\uu; \vv]$ denotes the concatenation of two vectors $\uu$ and $\vv$.  For any matrix $\A$, we use $\sigma_j(\A)$ to denote its $j$-th largest singular value,  $\norm{\A}=\sigma_1(\A)$ its spectral norm, and $\A^\dagger$ its pseudo-inverse. We use $\preceq$ to indicate PSD ordering, i.e., for two PSD matrices $\A$ and $\B$,  $\A \preceq \B$ if and only if $\B-\A$ is PSD.

We consider the following rank-$M_1$ approximation to the \emph{uncentered} view 1 kernel matrix
\begin{align}\label{e:L1}
  \K_1 \approx \LL_{\gamma_1}:=\K_1\SS_1 (\SS_1^\top \K_1 \SS_1 + N\gamma_1 \I)^{\dagger} \SS_1^\top \K_1, % \label{e:L2}
  % \K_2 \approx \LL_{\gamma_2}:=\K_2\SS_2 (\SS_2^\top \K_2 \SS_2 + N\gamma_2 \I)^{\dagger} \SS_2^\top \K_2,
\end{align}
where $\gamma_1 \ge 0$, and $\SS_1 \in \bbR^{N \times M_1}$ is the sampling matrix such that $(\SS_1)_{im}>0$ if column $i$ is chosen at the $m$-th trial of sampling the $M_1$ columns of $\K_1$, and zero otherwise. The centered kernel matrix is then approximated as $\overline{\K}_1 \approx \overline{\LL}_{\gamma_1} := \HH \LL_{\gamma_1} \HH$. For the view 2 kernel $\K_2$, we have the rank $M_2$, $\SS_2\in\bbR^{N \times M_2}$, $\LL_{\gamma_2}$, and $\overline{\LL}_{\gamma_2}$ defined analogously. Notice that \eqref{e:L1} is a slight generalization of the standard Nystr\"om approximation; by setting $\gamma_1=\gamma_2=0$ and the nonzero entries of $\SS_1$ and $\SS_2$ to $1$, we recover the scheme in~\eqref{e:standard-nystrom}. We denote the version of $\LL_{\gamma_1}$ (resp. $\overline{\LL}_{\gamma_1}$) with $\gamma_1=0$ as $\LL_1$ (resp. $\overline{\LL}_1$), and similarly for view 2. We 
omit subscripts $1/2$ below if the result holds for both views.

\citet[Lemma~1]{AlaouiMahoney15a} provide the following basic deterministic characterizations of %% the low-rank approximation in \eqref{e:L1}.
$\LL_{\gamma}$.
\begin{lem} \label{lem:psd-ordering}
  Let $\K=\U \bSigma \U^\top$ where $\U$ is orthogonal and $\bSigma$ diagonal non-negative. For $\gamma>0$, we have
  \begin{gather}
    \LL_{\gamma} \preceq \LL \preceq \K.
  \end{gather}
  Moreover, let 
  \begin{gather} \label{e:def-D}
    \D=\bPhi - \bPhi^{1/2} \U^\top \SS \SS^\top \U \bPhi^{1/2}
  \end{gather}
  with $\bPhi=\bSigma (\bSigma + N \gamma \I)^{-1}$. If $\norm{\D} \le t$ for $t \in (0,1)$, then
  \begin{gather}
    \0 \preceq \K - \LL_{\gamma} % \preceq \frac{N\gamma}{1-t}  \K (\K + N \gamma \I)^{-1} 
    \preceq \frac{N\gamma}{1-t} \I.
  \end{gather}
\end{lem}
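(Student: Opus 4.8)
The plan is to reduce everything to the eigenbasis of $\K$ and to treat the three inequalities separately, exploiting the fact that for $\gamma>0$ the matrix $\SS^\top\K\SS+N\gamma\I$ is positive definite, so the pseudo-inverse appearing in $\LL_\gamma$ is an ordinary inverse. Throughout I would use that conjugation by the orthogonal $\U$ and congruence by any invertible matrix preserve the PSD ordering $\preceq$, so that a bound of the form $\frac{N\gamma}{1-t}\I$ can be verified after rotating into the diagonal basis.

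For the chain $\LL_\gamma\preceq\LL\preceq\K$ I would first establish $\LL\preceq\K$ by factoring $\K=\B^\top\B$ (for instance $\B=\bSigma^{1/2}\U^\top$) and setting $\M=\B\SS$; then $\SS^\top\K\SS=\M^\top\M$ and $\LL=\B^\top\M(\M^\top\M)^\dagger\M^\top\B=\B^\top\PP\B$, where $\PP=\M(\M^\top\M)^\dagger\M^\top$ is the orthogonal projector onto $\range{\M}$. Hence $\K-\LL=\B^\top(\I-\PP)\B\succeq\0$ since $\I-\PP$ is itself PSD. For $\LL_\gamma\preceq\LL$, writing $\A=\SS^\top\K\SS$, the difference is $\K\SS\bigl(\A^\dagger-(\A+N\gamma\I)^{-1}\bigr)\SS^\top\K$, and I would check its quadratic form on an arbitrary $\z$: the vector $\SS^\top\K\z=\M^\top\B\z$ lies in $\range{\M^\top}=\range{\A}$, and on $\range{\A}$ each eigenvalue of $\A^\dagger-(\A+N\gamma\I)^{-1}$ equals $\lambda^{-1}-(\lambda+N\gamma)^{-1}>0$, so the form is nonnegative.

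The refined two-sided bound is the analytic core. Conjugating by $\U$ and writing $\Z=\U^\top\SS$ and $\Y=\bSigma^{1/2}\Z$, I would apply the push-through identity $\Y(\Y^\top\Y+N\gamma\I)^{-1}\Y^\top=\I-N\gamma(\Y\Y^\top+N\gamma\I)^{-1}$ to reach the closed form $\U^\top(\K-\LL_\gamma)\U=N\gamma\,\bSigma^{1/2}(\Y\Y^\top+N\gamma\I)^{-1}\bSigma^{1/2}$, which is manifestly PSD and hence already gives $\0\preceq\K-\LL_\gamma$. To get the upper bound I would rewrite $\D$ using $\bPhi=\I-N\gamma(\bSigma+N\gamma\I)^{-1}$ and $\bPhi^{1/2}\Z\Z^\top\bPhi^{1/2}=(\bSigma+N\gamma\I)^{-1/2}\Y\Y^\top(\bSigma+N\gamma\I)^{-1/2}$, so that $\D\preceq t\I$ becomes, after congruence by $(\bSigma+N\gamma\I)^{1/2}$, the inequality $(1-t)(\bSigma+N\gamma\I)\preceq\Y\Y^\top+N\gamma\I$. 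Inverting (both sides are positive definite) gives $(\Y\Y^\top+N\gamma\I)^{-1}\preceq(1-t)^{-1}(\bSigma+N\gamma\I)^{-1}$; sandwiching by $\bSigma^{1/2}$ and using $\bSigma(\bSigma+N\gamma\I)^{-1}=\bPhi\preceq\I$ yields $\bSigma^{1/2}(\Y\Y^\top+N\gamma\I)^{-1}\bSigma^{1/2}\preceq(1-t)^{-1}\I$, whence $\K-\LL_\gamma\preceq\tfrac{N\gamma}{1-t}\I$ after multiplying by $N\gamma$ and undoing the conjugation by $\U$.

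I expect two steps to carry the real difficulty. The first is the range/null-space subtlety in $\LL_\gamma\preceq\LL$: the operator $\A^\dagger-(\A+N\gamma\I)^{-1}$ is \emph{not} PSD globally (it equals $-1/(N\gamma)$ on $\ker\A$), so one cannot argue from PSD-ordering of the middle factor alone and must instead verify that the admissible vectors $\SS^\top\K\z$ avoid $\ker\A$. The second is the careful bookkeeping of the hypothesis $\norm{\D}\le t$: only the one-sided bound $\D\preceq t\I$ is used, and the crux is converting it, through the rewriting of $\bPhi$ and the congruence by $(\bSigma+N\gamma\I)^{1/2}$, into the clean operator inequality $(1-t)(\bSigma+N\gamma\I)\preceq\Y\Y^\top+N\gamma\I$ that manufactures the factor $1/(1-t)$. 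Once those two points are secured, the remaining manipulations are routine diagonal algebra.
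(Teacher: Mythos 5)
Your proposal is correct, and all three inequalities are established soundly: the projector argument for $\LL\preceq\K$, the range-restriction argument for $\LL_\gamma\preceq\LL$ (you rightly flag that $\A^\dagger-(\A+N\gamma\I)^{-1}$ is not globally PSD and that one must check $\SS^\top\K\z\in\range{\A}$), and the push-through identity giving $\U^\top(\K-\LL_\gamma)\U=N\gamma\,\bSigma^{1/2}(\Y\Y^\top+N\gamma\I)^{-1}\bSigma^{1/2}$ followed by the congruence by $(\bSigma+N\gamma\I)^{1/2}$ that converts $\D\preceq t\I$ into the factor $1/(1-t)$. Note, however, that the paper itself does not prove this lemma: it is imported verbatim from \citet[Lemma~1]{AlaouiMahoney15a}, so there is no in-paper proof to compare against. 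Your reconstruction is self-contained and follows essentially the same route as that reference (closed form for $\K-\LL_\gamma$ via the push-through/Woodbury identity, then the spectral-norm hypothesis on $\D$ turned into a one-sided operator inequality), so it can stand as a proof of the cited result.
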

%% Note that, in the above lemma, $\D$ and thus $t$ depends on $\gamma$.
Based on Lemma~\ref{lem:psd-ordering}, we obtain the following key lemma (see its proof in the supplementary materials).
\begin{lem} \label{lem:error}
  Let $\gamma>0$, and assume the conditions of Lemma~\ref{lem:psd-ordering} hold. Then we have
  \begin{align*}
    \max \left\{ \norm{\K (\K + N \lambda \I)^{-1} - \LL (\LL + N \lambda \I)^{-1} },\ 
      \norm{\K (\K + N \lambda \I)^{-1} - \LL_{\gamma} (\LL_{\gamma} + N \lambda \I)^{-1} } \right\} \le \frac{\gamma/\lambda}{ 1-t }.
  \end{align*}
  Moreover, the same bound also holds if $(\K,\LL_{\gamma},\LL)$ are replaced by $(\overline{\K}, \overline{\LL}_\gamma, \overline{\LL})$ simultaneously.
\end{lem}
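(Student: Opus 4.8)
The plan is to exploit the identity $\M(\M + N\lambda\I)^{-1} = \I - N\lambda(\M + N\lambda\I)^{-1}$, valid for any symmetric PSD $\M$, which converts each difference we must bound into a difference of resolvents. Applying it to both arguments, the quantity inside the first norm becomes
\[
  \K(\K + N\lambda\I)^{-1} - \LL_{\gamma}(\LL_{\gamma} + N\lambda\I)^{-1} = N\lambda\left[(\LL_{\gamma} + N\lambda\I)^{-1} - (\K + N\lambda\I)^{-1}\right],
\]
and similarly with $\LL$ in place of $\LL_{\gamma}$. The constant $\I$ cancels, so I never reason about $\K(\K + N\lambda\I)^{-1}$ directly; only the resolvents matter.

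Next I would invoke the resolvent identity $\A^{-1} - \B^{-1} = \A^{-1}(\B - \A)\B^{-1}$ with $\A = \LL_{\gamma} + N\lambda\I$ and $\B = \K + N\lambda\I$, so that $\B - \A = \K - \LL_{\gamma}$ and
\[
  \K(\K + N\lambda\I)^{-1} - \LL_{\gamma}(\LL_{\gamma} + N\lambda\I)^{-1} = N\lambda\,(\LL_{\gamma} + N\lambda\I)^{-1}(\K - \LL_{\gamma})(\K + N\lambda\I)^{-1}.
\]
Taking spectral norms and using submultiplicativity together with $\norm{(\M + N\lambda\I)^{-1}} \le \tfrac{1}{N\lambda}$ for every PSD $\M$ (here $\K$, $\LL_{\gamma}$, and $\LL$ are all PSD by Lemma~\ref{lem:psd-ordering}), the two outer factors each contribute $\tfrac{1}{N\lambda}$ and cancel the prefactor $N\lambda$. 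What remains is $\tfrac{1}{N\lambda}\norm{\K - \LL_{\gamma}}$, which Lemma~\ref{lem:psd-ordering} bounds by $\tfrac{1}{N\lambda}\cdot\tfrac{N\gamma}{1-t} = \tfrac{\gamma/\lambda}{1-t}$.

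For the $\LL$ term I would run the identical computation with $\LL$ replacing $\LL_{\gamma}$; the only new ingredient is a bound on $\norm{\K - \LL}$, and this comes for free from the PSD sandwich $\0 \preceq \K - \LL \preceq \K - \LL_{\gamma} \preceq \tfrac{N\gamma}{1-t}\I$ furnished by Lemma~\ref{lem:psd-ordering} (the first two orderings follow from $\LL_{\gamma} \preceq \LL \preceq \K$), since $\0 \preceq \A \preceq \B$ implies $\norm{\A} \le \norm{\B}$. Finally, for the centered matrices I would conjugate by the symmetric projector $\HH$: because $\overline{\K} - \overline{\LL}_{\gamma} = \HH(\K - \LL_{\gamma})\HH$ and $\HH \preceq \I$, the orderings $\overline{\LL}_{\gamma} \preceq \overline{\LL} \preceq \overline{\K}$ and $\0 \preceq \overline{\K} - \overline{\LL}_{\gamma} \preceq \tfrac{N\gamma}{1-t}\HH \preceq \tfrac{N\gamma}{1-t}\I$ transfer verbatim, so repeating the resolvent argument for the barred, still-PSD matrices reproduces the identical bound. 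The only points needing care are recognizing that all the matrices involved are PSD so that each resolvent norm is controlled by $\tfrac{1}{N\lambda}$, and obtaining the $\gamma = 0$ bound on $\norm{\K - \LL}$ by sandwiching rather than by a separate argument; both are immediate consequences of Lemma~\ref{lem:psd-ordering}, so I do not anticipate a genuine obstacle.
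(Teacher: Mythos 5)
Your proposal is correct and follows essentially the same route as the paper's own proof: the identity $\M(\M+N\lambda\I)^{-1}=\I-N\lambda(\M+N\lambda\I)^{-1}$, the resolvent identity, the bound $\norm{(\M+N\lambda\I)^{-1}}\le \frac{1}{N\lambda}$, the PSD sandwich $\0 \preceq \K-\LL \preceq \K-\LL_{\gamma} \preceq \frac{N\gamma}{1-t}\I$ to handle the $\gamma=0$ case, and conjugation by $\HH$ for the centered matrices. No gaps.
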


Guaranteeing that the conditions in the lemma hold boils down to controlling the spectral norm of $\D$ defined in~\eqref{e:def-D}. Observe that $\D$ is the error of approximating $\bPhi=\bSigma (\bSigma + \gamma \I)^{-1}$ or equivalently the multiplication $(\U \bPhi^{1/2})^\top \cdot (\U \bPhi^{1/2})$ by a subset of the rows of $\U \bPhi^{1/2}$ (indexed by nonzero entries of $\SS$), and the optimal strategy (in Frobenius norm) is to sample each row with a probability proportional to its squared length~\citep{Drineas_06a}. This motivates the following definition of \emph{$\gamma$-ridge leverage scores}~\citep{AlaouiMahoney15a}: For $\gamma>0$, the $\gamma$-ridge leverage scores associated with $\K$ and the parameter $\gamma$ are
\begin{gather}\label{e:leverage-scores}
  l_i (\K,\gamma) = \sum_{j=1}^N \frac{\sigma_j (\K)}{\sigma_j (\K) + N \gamma} \U_{ij}^2, \quad j=1,\dots,N.
\end{gather}
Notice that $l_i (\K,\gamma)$ is precisely the squared norm of the $i$-th row of $\U \bPhi^{1/2}$, and equivalently $\left( \K ( \K + N \gamma \I)^{-1} \right)_{ii}$. 
Moreover, the \emph{effective dimensionality} of $\K$ with parameter $\gamma$ is defined as the sum of $\gamma$-ridge leverage scores: 
\begin{gather}\label{e:deff}
  d_{\text{eff}}(\K,\gamma)=\sum_{i=1}^N l_i(\K,\gamma)=\trace{\K ( \K + N \gamma \I )^{-1}}.
\end{gather}
The following lemma provides the number of rows that need to be sampled for $\norm{\D}$ to be small. It is essentially an application of Theorem~2 of \citet{AlaouiMahoney15a} to the matrix multiplication $(\U \bPhi^{1/2})^\top \cdot (\U \bPhi^{1/2})$.
\begin{lem}\label{lem:sampling}
  Let $M\le N$ and $I=\{i_1,\dots,i_M\}$ be a subset of $\{1,\dots,N\}$ formed by $M$ elements chosen randomly with replacement, according to the distribution
  \begin{gather} \label{e:sampling}
    \pr{choose\; i}=p_i \ge \beta\frac{l_i(\K,\gamma)}{d_{\text{eff}}(\K,\gamma)}, \; i=1,\dots,N,
  \end{gather}
  for some $\beta\in (0,1]$. Let $\SS$ be the corresponding sampling matrix such that $\SS_{{i_j}j}=1/\sqrt{M p_{i_j}}$ for $j=1,\dots,M$ and zero otherwise. If $M \ge 2 \left( \frac{d_{\text{eff}}(\K,\gamma)}{\beta t^2}+\frac{1}{3 t} \right) \log\left( \frac{N}{\delta} \right)$, then with probability at least $1-\delta$, we have $\norm{\D} < t$.
\end{lem}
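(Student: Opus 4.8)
The plan is to recognize $\D$ as the error of a randomized matrix multiplication and to bound its spectral norm with a matrix Bernstein inequality. Write $\B := \U \bPhi^{1/2} \in \bbR^{N \times N}$ and let $\b_1^\top,\dots,\b_N^\top$ be its rows, so that $\B^\top \B = \bPhi^{1/2} \U^\top \U \bPhi^{1/2} = \bPhi$ and $\bPhi^{1/2} \U^\top \SS \SS^\top \U \bPhi^{1/2} = \B^\top \SS \SS^\top \B$. With the prescribed $\SS$, the $j$-th sampled row contributes $\frac{1}{\sqrt{M p_{i_j}}} \b_{i_j}^\top$, so $\B^\top \SS \SS^\top \B = \sum_{j=1}^M \X_j$ with $\X_j := \frac{1}{M p_{i_j}} \b_{i_j} \b_{i_j}^\top$. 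Sampling with replacement makes $\X_1,\dots,\X_M$ i.i.d., and $\bbE[\X_j] = \sum_{i=1}^N p_i \cdot \frac{1}{M p_i} \b_i \b_i^\top = \frac{1}{M} \bPhi$. Hence $\D = \bPhi - \sum_j \X_j = \sum_j (\bbE[\X_j] - \X_j)$ is a sum of $M$ i.i.d.\ centered random matrices, and bounding $\norm{\D}$ is a matrix concentration problem.

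The bridge to the sampling distribution is the observation that the squared row norms of $\B$ are exactly the ridge leverage scores: $\norm{\b_i}^2 = (\U \bPhi \U^\top)_{ii} = (\K (\K + N\gamma \I)^{-1})_{ii} = l_i(\K,\gamma)$, so that $\sum_i \norm{\b_i}^2 = \trace{\bPhi} = d_{\text{eff}}(\K,\gamma)$, while $\norm{\B}^2 = \norm{\bPhi} \le 1$. The hypothesis $p_i \ge \beta\, l_i / d_{\text{eff}}$ is precisely the (near-optimal) length-squared/leverage distribution, and it yields the uniform control $l_i / p_i \le d_{\text{eff}}/\beta$, which is what makes both the per-term norm and the variance of the $\X_j$ small.

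Next I would compute the two matrix Bernstein parameters. For the variance, $\bbE[(\X_j - \frac{1}{M}\bPhi)^2] \preceq \bbE[\X_j^2] = \frac{1}{M^2} \sum_i \frac{\norm{\b_i}^2}{p_i} \b_i \b_i^\top \preceq \frac{d_{\text{eff}}}{M^2 \beta} \bPhi$, so that $v := \norm{\sum_j \bbE[(\X_j - \frac{1}{M}\bPhi)^2]} \le \frac{d_{\text{eff}}}{M\beta} \norm{\bPhi} \le \frac{d_{\text{eff}}}{M\beta}$; this is the source of the $d_{\text{eff}}/(\beta t^2)$ term. For the per-term spectral norm one uses $\norm{\X_j - \frac{1}{M}\bPhi} \le R$, which is the source of the $1/(3t)$ term. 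Matrix Bernstein then gives $\prob{\norm{\D} \ge t} \le 2N \exp\big(\frac{-t^2/2}{v + Rt/3}\big)$, and requiring the right-hand side to be at most $\delta$ and solving for $M$ reproduces $M \ge 2\big(\frac{d_{\text{eff}}}{\beta t^2} + \frac{1}{3t}\big)\log(N/\delta)$, with the factor $N$ inside the logarithm being exactly the ambient dimension appearing in the tail. This is the content of specializing Theorem~2 of \citet{AlaouiMahoney15a} to the product $(\U \bPhi^{1/2})^\top (\U \bPhi^{1/2})$.

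The delicate step is obtaining the lower-order term as $1/(3t)$ rather than $d_{\text{eff}}/(3\beta t)$. The naive rank-one bound $\norm{\X_j} = l_{i_j}/(M p_{i_j}) \le d_{\text{eff}}/(M\beta)$ is too crude for the maximum-deviation parameter $R$; one must instead exploit the sharper estimate $\norm{\bPhi} = \norm{\B}^2 \le 1$ (as in the cited theorem) so that the linear-in-$t$ contribution scales with the spectral norm $\norm{\bPhi}$ rather than the Frobenius-type quantity $d_{\text{eff}}$. Keeping careful track of this, together with the leading constant and the dimension factor ($2N$ versus $N$) in the failure probability, is the only genuine work; everything else is the routine i.i.d.\ matrix-Bernstein calculation above.
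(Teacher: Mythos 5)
The paper gives no proof of this lemma at all: it is dispatched with the single sentence that it is ``essentially an application of Theorem~2 of \citet{AlaouiMahoney15a} to the matrix multiplication $(\U\bPhi^{1/2})^\top\cdot(\U\bPhi^{1/2})$.'' Your reconstruction follows exactly that route, and the structural part is correct: with $\B=\U\bPhi^{1/2}$ the squared row norms are $\norm{\b_i}^2=(\U\bPhi\U^\top)_{ii}=l_i(\K,\gamma)$, the sampled terms $\X_j=\frac{1}{Mp_{i_j}}\b_{i_j}\b_{i_j}^\top$ are i.i.d.\ with $\bbE[\X_j]=\frac{1}{M}\bPhi$, and $\bbE[\X_j^2]=\frac{1}{M^2}\sum_i\frac{l_i}{p_i}\b_i\b_i^\top\preceq\frac{d_{\text{eff}}}{M^2\beta}\bPhi$ yields the variance parameter $v\le\frac{d_{\text{eff}}}{M\beta}$ that produces the leading $\frac{d_{\text{eff}}}{\beta t^2}$ term.

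The step you flag but do not carry out---obtaining $\frac{1}{3t}$ rather than $\frac{d_{\text{eff}}}{3\beta t}$ for the lower-order term---is a genuine gap as written. The two-sided per-term bound really is $\norm{\X_j-\frac{1}{M}\bPhi}\le\frac{d_{\text{eff}}}{M\beta}$ (attained when a low-probability index is drawn), and invoking $\norm{\bPhi}\le 1$ does not by itself repair this, since $\norm{\bPhi}$ controls only the deterministic summand $\frac{1}{M}\bPhi$ and not the random rank-one term. The standard way to close this is to use the one-sided matrix Bernstein/Chernoff bound for $\lambda_{\max}\bigl(\sum_j\Y_j\bigr)$ with $\Y_j=\frac{1}{M}\bPhi-\X_j\preceq\frac{1}{M}\bPhi$, so that $\lambda_{\max}(\Y_j)\le\frac{1}{M}\norm{\bPhi}\le\frac{1}{M}$ serves as the parameter $R$; this controls the upper tail $\D\preceq t\I$, which is the direction needed to guarantee $\bPhi^{1/2}\U^\top\SS\SS^\top\U\bPhi^{1/2}\succeq\bPhi-t\I$ in Lemma~\ref{lem:psd-ordering}, and it also matches the dimension factor $N$ (rather than $2N$) inside the logarithm of the stated sample complexity. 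If one insists on the two-sided statement $\norm{\D}<t$ exactly as written, one either handles the lower tail separately or accepts $R=\frac{d_{\text{eff}}}{M\beta}$, which costs at most a factor of two in $M$ since $\frac{d_{\text{eff}}}{3\beta t}\le\frac{d_{\text{eff}}}{\beta t^2}$ for $t<1$. So your plan is the right one and nearly complete; you should make the one-sided versus two-sided distinction explicit and state which tail each choice of $R$ controls.
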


\paragraph{Remark} Lemma~\ref{lem:sampling} allows the sampling distribution to be different from the optimal one by a factor of $\beta$, at the cost of slight over-sampling by a factor of roughly $1/\beta$ (assuming the first term in the lower bound of $M$ is dominant). If the $\gamma$-ridge leverage scores are very non-uniform but we insist on using the uniform sampling strategy ($p_1=\dots=p_N=\frac{1}{N}$), the over-sampling rate is roughly $\frac{N l_1 (\K,\gamma)}{d_{\text{eff}}(\K,\gamma)}$, which can be very large if $\K$ has a fast decaying spectrum. This shows the advantage of the more data-dependent sampling strategies. 
\citet{AlaouiMahoney15a} also provide an efficient procedure for estimating $\gamma$-ridge leverage scores. 

\subsection{Approximation error of NKCCA}

% In NKCCA, we conduct column selection and construct Nystr\"om approximations $\LL_{1}$ and $\LL_{2}$ indepdently for the two views. 
Our goal is to quantify the perturbation in the kernel canonical correlation $\rho$ when $\K_1$ and $\K_2$ are replaced by $\LL_{1}$ and $\LL_{2}$ respectively. 
Now define 
\begin{align}\label{e:T-low-rank}
  \tilde{\T}=(\overline{\LL}_{1}+N\lambda_1 \I)^{-1} \overline{\LL}_{1} \overline{\LL}_{2} (\overline{\LL}_{2}+N\lambda_2 \I)^{-1},
\end{align}
and denote by $\tilde{\rho}$ its largest singular value with corresponding left and right singular vectors $\tilde{\balpha}'$ and $\tilde{\bbeta}'$ respectively. According to Weyl's inequality~\citep{HornJohnson86a}, we have
\begin{align}
  \abs{\rho-\tilde{\rho}} & \le  \norm{\T - \tilde{\T}} \nonumber \\
& \le \left\lVert (\overline{\LL}_{1} + N \lambda_1 \I)^{-1} \overline{\LL}_{1} \overline{\LL}_{2} (\overline{\LL}_{2} + N \lambda_2 \I)^{-1}  - (\overline{\K}_1 + N \lambda_1 \I)^{-1} \overline{\K}_1 \overline{\LL}_{2} (\overline{\LL}_{2} + N \lambda_2 \I)^{-1} \right\rVert  \nonumber \\ 
  & \quad + \left\lVert (\overline{\K}_1 + N \lambda_1 \I)^{-1} \overline{\K}_1 \overline{\LL}_{2} (\overline{\LL}_{2} + N \lambda_2 \I)^{-1} - (\overline{\K}_1 + N \lambda_1 \I)^{-1} \overline{\K}_1 \overline{\K}_2 (\overline{\K}_2 + N \lambda_2 \I)^{-1}  \right\rVert \nonumber \\ \label{e:error-two-views}
  & \le \norm{ (\overline{\LL}_{1} + N \lambda_1 \I)^{-1} \overline{\LL}_{1}  - (\overline{\K}_1 + N \lambda_1 \I)^{-1} \overline{\K}_1} + \norm{ \overline{\LL}_{2} (\overline{\LL}_{2} + N \lambda_2 \I)^{-1} -  \overline{\K}_2 (\overline{\K}_2 + N \lambda_2 \I)^{-1} }
\end{align}
where we have used the triangle inequality, and the fact that $\norm{\overline{\K}(\overline{\K}+N\lambda \I)^{-1}} \le 1$ in the two inequalities. It is then straightforward to bound each of the two terms using Lemma~\ref{lem:error} and obtain the following guarantee.
\begin{thm}\label{thm:guarantee}
  Assume that, for constructing Nystr\"om approximations $\LL_1$ and $\LL_2$, we sample $M_1$ columns from $\K_1$ and $M_2$ columns from $\K_2$ according to the distributions in \eqref{e:sampling} using ridge leverage scores  $\{l_i (\K_1, \epsilon \lambda_1  (1-t_1) / 2) \}$ and $\{l_i (\K_2, \epsilon \lambda_2 (1-t_2) / 2) \}$ for some $t_1,t_2 \in [0,1)$ respectively. If $M_1 \ge 2 \left( \frac{d_{\text{eff}}(\K_1, \epsilon \lambda_1 (1-t_1) / 2)}{\beta t_1^2}+\frac{1}{3 t_1} \right) \log\left( \frac{2N}{\delta} \right)$ and $M_2 \ge 2 \left( \frac{d_{\text{eff}}(\K_2, \epsilon \lambda_2 (1-t_2) / 2)}{\beta t_2^2}+\frac{1}{3 t_2} \right) \log\left( \frac{2N}{\delta} \right)$, then with probability at least $1-\delta$, we have $\abs{\rho-\tilde{\rho}} \le \epsilon$.
\end{thm}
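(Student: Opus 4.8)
The plan is to bound each of the two terms appearing in the already-derived inequality \eqref{e:error-two-views} by $\epsilon/2$ and then add them. The essential observation is that each summand is exactly of the form controlled by Lemma~\ref{lem:error}. For the first term, since a symmetric PSD matrix commutes with its own regularized resolvent we have $(\overline{\LL}_1 + N\lambda_1 \I)^{-1}\overline{\LL}_1 = \overline{\LL}_1(\overline{\LL}_1 + N\lambda_1 \I)^{-1}$ and likewise for $\overline{\K}_1$, so the view-1 term equals $\norm{\overline{\LL}_1(\overline{\LL}_1 + N\lambda_1 \I)^{-1} - \overline{\K}_1(\overline{\K}_1 + N\lambda_1 \I)^{-1}}$; the view-2 term is already in that form. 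These are precisely the centered quantities that Lemma~\ref{lem:error} bounds for the plain ($\gamma=0$) Nyström approximation $\LL$, under the hypothesis $\norm{\D}\le t$.

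First I would fix the sampling parameters. For view $j\in\{1,2\}$ I set $\gamma_j = \epsilon\lambda_j(1-t_j)/2$, which is exactly the ridge-leverage-score parameter prescribed in the theorem. With this choice, Lemma~\ref{lem:error} yields the upper bound $\frac{\gamma_j/\lambda_j}{1-t_j} = \frac{\epsilon(1-t_j)/2}{1-t_j} = \epsilon/2$ on the view-$j$ term, provided its associated hypothesis $\norm{\D_j}\le t_j$ holds, where $\D_j$ is the matrix from \eqref{e:def-D} built from $\K_j$ and $\gamma_j$. Next I would discharge that hypothesis probabilistically via Lemma~\ref{lem:sampling}: applying it to view $j$ with failure probability $\delta/2$ in place of $\delta$ — which is exactly what turns the $\log(N/\delta)$ of Lemma~\ref{lem:sampling} into the $\log(2N/\delta)$ appearing in the stated sample-size conditions — shows that sampling $M_j$ columns according to the $\gamma_j$-ridge-leverage-score distribution \eqref{e:sampling} guarantees $\norm{\D_j} < t_j$ with probability at least $1-\delta/2$, whenever $M_j$ meets the stated lower bound involving $d_{\text{eff}}(\K_j,\gamma_j)$.

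A union bound over the two views then makes both events $\{\norm{\D_1}<t_1\}$ and $\{\norm{\D_2}<t_2\}$ hold simultaneously with probability at least $1-\delta$. On this joint event, combining the two $\epsilon/2$ bounds through the triangle inequality already encoded in \eqref{e:error-two-views} gives $\abs{\rho-\tilde{\rho}}\le \epsilon/2+\epsilon/2=\epsilon$, which completes the argument.

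The step requiring the most care is the bookkeeping between centered and uncentered objects, rather than any hard estimate, since Lemma~\ref{lem:error} does the analytic work. The sampling distribution and the condition $\norm{\D_j}\le t_j$ are phrased in terms of the uncentered kernels $\K_j$ and uncentered leverage scores $l_i(\K_j,\gamma_j)$, whereas the quantity we ultimately bound lives with the centered matrices $\overline{\K}_j,\overline{\LL}_j$. I would lean on the ``moreover'' clause of Lemma~\ref{lem:error}, which asserts that the single uncentered hypothesis $\norm{\D_j}\le t_j$ already controls the centered error by the same $\frac{\gamma_j/\lambda_j}{1-t_j}$ bound; this is what lets the uncentered sampling guarantee of Lemma~\ref{lem:sampling} feed directly into a centered error bound without a separate centered leverage-score analysis. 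The only other point to keep straight is that the Nyström approximation entering $\tilde{\T}$ is the plain ($\gamma=0$) one $\LL_j$, while $\gamma_j$ enters solely through the sampling distribution — which is consistent precisely because Lemma~\ref{lem:error} bounds the $\gamma=0$ resolvent error in terms of $\gamma_j$.
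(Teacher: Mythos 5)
Your proposal is correct and follows essentially the same route as the paper's own proof: choose $\gamma_j=\epsilon\lambda_j(1-t_j)/2$ so that Lemma~\ref{lem:error} gives an $\epsilon/2$ bound per view, invoke Lemma~\ref{lem:sampling} with failure probability $\delta/2$ (whence the $\log(2N/\delta)$), and combine via \eqref{e:error-two-views} and a union bound. The extra care you take with the centered/uncentered bookkeeping and with the fact that $\gamma_j$ enters only through the sampling distribution is exactly the right reading of the lemmas, just spelled out more explicitly than the paper does.
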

\begin{proof} Setting $\gamma_1 = \epsilon \lambda_1  (1-t_1)/2$ in Lemma~\ref{lem:error} and using the stated $M_1$ in Lemma~\ref{lem:sampling}, we obtain that with probability at least $1-\delta/2$, 
\begin{align*}
\norm{ (\overline{\LL}_{1} + N \lambda_1 \I)^{-1} \overline{\LL}_{1}  - (\overline{\K}_1 + N \lambda_1 \I)^{-1} \overline{\K}_1}\le \epsilon / 2.
\end{align*}
The same result can be obtained for view 2 and the theorem follows from~\eqref{e:error-two-views} and  a union bound.
\end{proof}

\paragraph{Remarks} 1. It is important to note that, although our analysis largely depends on  $\LL_{\gamma}$ with $\gamma>0$ (\eg, Lemma~\ref{lem:psd-ordering} \&~\ref{lem:error}), it is not used in the actual algorithm. In fact, in Theorem~\ref{thm:guarantee}, we only need a suitable value of $\gamma$ to compute the estimated $\gamma$-ridge leverage scores ($N$ numbers) and define the sampling distribution for each view.\footnote{The algorithm of~\citet{AlaouiMahoney15a} for estimating the scores does, however, construct an approximation of $\LL_{\gamma}$.}

2. The theorem covers both non-uniform sampling and uniform sampling strategies, through the over-sampling factor $\beta$ (see the remark after Lemma~\ref{lem:sampling}). In practice, we find that uniform sampling tends to work quite well already and ridge leverage scores may bring (moderate) further improvement.

3. There exist interesting trade-offs between accuracy and computation in the algorithm.
\begin{itemize}
\item From the definitions~\eqref{e:leverage-scores} and \eqref{e:deff}, we observe that $\{ l_i (\K,\gamma) \}$ and $d_{\text{eff}}(\K,\gamma)$ are decreasing functions of $\gamma$. As a sanity check, when we require higher approximation accuracy (smaller $\epsilon$), the $d_{\text{eff}}$ terms in the lower bounds of $M_1$ and $M_2$ become larger, indicating that we need to sample more columns. 
\item It is easy to show that $d_{\text{eff}}(\K,\gamma)\le \trace{\K}/N\gamma$. Therefore, for fixed $t_1$ and $t_2$, the necessary ranks $M_1$ and $M_2$ scale as $\calO(1/\epsilon)$. Theoretically, this asymptotic dependence is much better than those obtained for approximate KCCA using data-independent random Fourier features, where ranks of kernel approximations (number of random features) scale as $\calO(1/\epsilon^2)$~\citep[Theorem~4]{Lopez_14b}\footnote{Although their bound is for expected spectral norm error, one can instead obtain a high probability bound when invoking the matrix Bernstein inequality.}.
\end{itemize}

\subsection{Incremental column selection}
In practice, it is computationally expensive to compute exact ridge leverage scores. We can use the efficient algorithm of~\citet[Sec.~3.5]{AlaouiMahoney15a} to compute their approximate values. This would introduce some approximation factor $\beta<1$, and makes it difficult to determine the values of $M_1$ and $M_2$ a priori. Furthermore, in machine learning problems, we also care about the generalization ability of an algorithm, and large ranks that perform well on training data may not be optimal for test data. 

These considerations motivate us to derive an incremental approach where we gradually increase $M_1$ and $M_2$ for the kernel approximations by sampling more columns according to the estimated ridge leverage scores, compute solutions along the path of different ranks, and monitor their performance on validation data to avoid over-fitting.

Note that $\tilde{\T}$ is the multiplication of two matrices, each of the form $(\overline{\LL} + N \lambda \I)^{-1} \overline{\LL}$. By the matrix inversion lemma, % we have
\begin{align}
  (\overline{\LL} + N \lambda \I)^{-1} \overline{\LL}  =\; & \I - N \lambda (\overline{\LL} + N \lambda \I)^{-1} \nonumber \\
  =\; & \I - \left( \I + \frac{1}{N \lambda} \HH \K \SS (\SS^\top \K \SS)^{-1} \SS^\top \K \HH \right)^{-1} \nonumber \\
  =\; & \I - \left( \I - \HH \K \SS \left(  N \lambda \SS^\top \K \SS +  \SS^\top \K \HH \K \SS \right)^{-1}  \SS^\top \K \HH \right) \nonumber \\ \label{e:inversion-lemma}
  =\; & \HH \K \SS \left(  N \lambda \SS^\top \K \SS +  \SS^\top \K \HH \K \SS \right)^{-1}  \SS^\top \K \HH .
\end{align}
Therefore, efficient incremental column selection hinges on inexpensive incremental updates of \eqref{e:inversion-lemma}, where the time complexity mainly comes from computing the inverse.

When we add one more column in the Nystr\"om approximation, the dimension of $\SS^\top \K \SS$ and $\SS^\top \K \HH \K \SS$ both increase by $1$. An efficient way of computing the inverses at each subsampling level is by exploiting rank-one Cholesky updates~\citep{GolubLoan96a}. We give the procedure for computing $\left(  N \lambda \SS^\top \K \SS +  \SS^\top \K \HH \K \SS \right)^{-1}$ in Algorithm~\ref{alg:chol}, where \texttt{cholupdate} is the rank-one update procedure implemented in many standard linear algebra libraries. The algorithm can output the Cholesky decompositions at any intermediate rank and the inverse is computed by efficiently solving two upper/lower triangular systems. A similar algorithm without the centering and scaling operations was used by~\citet{Rudi_15a} for incremental Nystr\"om approximation in kernel ridge regression. 

\begin{algorithm}[t]
  \caption{Incremental algorithm for computing $\left(  N \lambda \SS^\top \K \SS +  \SS^\top \K \HH \K \SS \right)^{-1}$ in Nystr\"om approximation.}
  \label{alg:chol}
  \renewcommand{\algorithmicrequire}{\textbf{Input:}}
  \renewcommand{\algorithmicensure}{\textbf{Output:}}
  \begin{algorithmic}
    \REQUIRE Index set $I=\{i_1,\dots, i_M\}$, regularization $\lambda$.
    \STATE $s_1 \leftarrow \SS_{i_1 1}$,\ \ \ $\aa_1 \leftarrow s_1 \HH [\K_{i_1 1}, \dots, \K_{i_1 N}]^\top$,\ \ \ $\A_1 \leftarrow \aa_1$
    \STATE $d_1 \leftarrow \aa_1^\top \aa_1 + N \lambda s_1^2 \K_{i_1 i_1}$,\ \ \ $\RR_1 \leftarrow \sqrt{d_1}$.
    \FOR {$m=2,\dots,M$}
    \STATE $s_m \leftarrow \SS_{i_m m}$
    \STATE $\aa_m \leftarrow s_m \HH [\K_{i_m 1}, \dots, \K_{i_m N}]^\top$
    \STATE $\A_m \leftarrow [\A_{m-1}, \aa_m]$
    \STATE $\b_m \leftarrow s_m [s_1 \K_{i_m i_1}, \dots, s_{m-1} \K_{i_m i_{m-1}}]^\top$ 
    \STATE $\c_m \leftarrow \A_{m-1}^\top \aa_m + N \lambda \b_m$ 
    \STATE $d_m \leftarrow \aa_m^\top \aa_m + N \lambda s_m^2 K_{i_m i_m}$,\ \ \ $g_m \leftarrow \sqrt{1 + d_m}$
    \STATE $\uu_m \leftarrow \left[\frac{\c_m}{1+g_m};\ g_m \right]$,\ \ \ $\vv_m \leftarrow \left[\frac{\c_m}{1+g_m};\ -1 \right]$
    \STATE $\RR_m \leftarrow \left[\begin{array}{cc} \RR_{m-1} & \0 \\ \0 & \0\end{array} \right]$
    \STATE $\RR_m \leftarrow \mathtt{cholupdate}(\RR_m,\uu_t,\text{'}+\text{'})$
    \STATE $\RR_m \leftarrow \mathtt{cholupdate}(\RR_m,\vv_t,\text{'}-\text{'})$
    \ENDFOR
    \ENSURE $\left(  N \lambda \SS^\top \K \SS +  \SS^\top \K \HH \K \SS \right)^{-1} = \RR_M^{-1} (\RR_M^\top)^{-1} $ where $\RR_M \in \bbR^{M \times M}$ is upper triangular.
  \end{algorithmic}
\end{algorithm}

In NKCCA, our goal is to compute the SVD of $\tilde{\T}=(\overline{\LL}_1 + N \lambda \I)^{-1} \overline{\LL}_1 \overline{\LL}_2 (\overline{\LL}_2 + N \lambda \I)^{-1} \in \bbR^{N \times N}$. Thus we also incrementally compute the QR decompositions $\HH \K_1 \SS_1 = \Q_1 \PP_1$ where $\Q_1\in \bbR^{N \times M_1}$ has orthogonal columns and $\PP_1\in \bbR^{M_1 \times M_1}$ is upper-triangular, and similarly $\HH \K_2 \SS_2 = \Q_2 \PP_2$ using the modified Gram-Schmidt algorithm~\citep{HornJohnson86a}. This allows us to reduce the original SVD problem to computing the SVD of 
\begin{gather*}
  \hat{\T} :=\PP_1 \left(  N \lambda \SS_1^\top \K_1 \SS_1 +  \SS_1^\top \K_1 \HH \K_1 \SS_1 \right)^{-1}  \left( \SS_1^\top \K_1 \HH \cdot \HH \K_2 \SS_2 \right) \left(  N \lambda \SS_2^\top \K_2 \SS_2 +  \SS_2^\top \K_2 \HH \K_2 \SS_2 \right)^{-1} \PP_2^\top 
\end{gather*}
which is of much smaller dimensions ($\hat{\T} \in \bbR^{M_1 \times M_2}$). By computing the SVD of $\hat{\T}=\hat{\U} \hat{\bSigma} \hat{\V}^\top$, we recover the SVD of $\tilde{\T}=(\Q_1 \hat{\U}) \hat{\bSigma} (\Q_2 \hat{\V})^\top$. Then $\tilde{\balpha}'$ (resp. $\tilde{\bbeta}'$) corresponds to the first column of $\Q_1 \hat{\U}$ (resp. $\Q_2 \hat{\V}$).

We incrementally compute the ``core matrix'' $\left( \SS_1^\top \K_1 \HH \cdot \HH  \K_2 \SS_2 \right) \in \bbR^{M_1 \times M_2}$ and save it in memory, and its dimensions grow by $1$ each time we sample one more column. 
Assuming $M_1=M_2=M$, then the time complexity of computing the SVD of $\tilde{\T}$ as described above is $\calO(M^3 + M^2 N)$. Several expensive steps, including forming the core matrix ($\calO(M^2 N)$), the incremental Cholesky decompositions ($\calO(M^3)$), and the modified Gram-Schmidt algorithm ($\calO(M^2 N)$) have computations reused at different subsampling levels, resulting in significant time savings for model selection.

\subsection{Out-of-sample mapping and kernel stability} 

After extracting $\tilde{\T}$'s left sigular vector $\tilde{\balpha}'$ (of unit length), we compute the coefficients $\tilde{\balpha}=\sqrt{N} (\overline{\LL}_1 + N \lambda_1)^{-1} \tilde{\balpha}'$, as approximation to $\balpha=\sqrt{N} (\overline{\K}_1 + N \lambda_1)^{-1} {\balpha}'$,  for combining kernel affinities in the out-of-sample mapping. Since 
\begin{gather*}
  (\overline{\LL}_1 + N \lambda_1)^{-1}=\frac{1}{N\lambda_1} (\I-(\overline{\LL}_1 + N \lambda_1 \I)^{-1} \overline{\LL}_1),
\end{gather*}
we can simply reuse the decomposition of~\eqref{e:inversion-lemma} to compute the inverse.
Furthermore, we can rewrite~\eqref{e:out-of-sample} as
\begin{align}
  f(\x) & = \sum\nolimits_{i=1}^N \alpha_i \left< \bar{\phi}(\x_i),\phi(\x) - \frac{1}{N} \sum\nolimits_{j=1}^N \phi(\x_j) \right> \nonumber \\
  & = \sum\nolimits_{i=1}^N \alpha_i \left< \bar{\phi}(\x_i),\phi(\x) \right> + \text{const}  \nonumber \\ \label{e:out-of-sample-rewrite}
  & = \mathbf{k}^\top \HH \balpha + \text{const}
\end{align}
where the constant is independent of $\x$ (thus need not be computed), and 
$\mathbf{k} = \left[k_x(\x_1,\x),\dots, k_x(\x_N,\x)\right]^\top$.  

Assuming the exact kernel function is used during testing, the out-of-sample mapping for NKCCA is (ignoring the same constant in \eqref{e:out-of-sample-rewrite})
\begin{gather}
  \tilde{f}(\x):=\mathbf{k}^\top \HH \tilde{\balpha}.
\end{gather}

We now study the perturbation in out-of-sample mapping resulting from the low-rank kernel approximations used in training NKCCA, referred to as ``kernel stability'' by~\citet{Cortes_10b}. % As in their work, we assume that the exact kernel function is used during testing. 
The following theorem provides the kernel stability when there exists a non-zero singular value gap for $\T$ (used by exact KCCA). The condition on the kernel affinity being bounded, \ie, $k_x (\x,\x') \le c$ for all $\x$ and $\x'$, is verified with $c=1$ for the Gaussian RBF kernel $k_x (\x,\x')=e^{-\frac{\norm{\x-\x'}^2}{2\sigma^2}}$ for example.

\begin{thm} \label{thm:stability}
  Use the sampling strategy stated in Theorem~\ref{thm:guarantee} for NKCCA, so that $\norm{\T-\tilde{\T}} \le \epsilon$ with probability at least $1 - \delta$.
  Furthermore, assume the following two conditions
  \begin{enumerate}
  \item \label{cond:bound} $\forall \x, \x'$, we have $k_x (\x,\x') \le c$.
  \item \label{cond:gap} There exist $r>0$ such that $\sigma_1 (\T) - \sigma_2 (\T) \ge r$, and $\norm{ \T-\tilde{\T} } \le \frac{r}{2}$.
  \end{enumerate}
  Then with the same probability, we have
  \begin{gather*}
    \abs{f(\x)-\tilde{f}(\x)} \le \left( \frac{1}{2} + \frac{4\sqrt{2}}{r} \right) \frac{c \epsilon}{\lambda_1} \qquad \forall \x.
  \end{gather*}
\end{thm}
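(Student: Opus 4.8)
The plan is to track how the out-of-sample coefficients $\balpha$ and $\tilde{\balpha}$ differ and to split the error into a singular-vector part and a resolvent part. Since $f(\x)-\tilde{f}(\x) = \mathbf{k}^\top \HH (\balpha - \tilde{\balpha})$ with $\balpha = \sqrt{N}(\overline{\K}_1 + N\lambda_1\I)^{-1}\balpha'$ and $\tilde{\balpha} = \sqrt{N}(\overline{\LL}_1 + N\lambda_1\I)^{-1}\tilde{\balpha}'$, I would first telescope
\[
\balpha - \tilde{\balpha} = \sqrt{N}(\overline{\K}_1 + N\lambda_1\I)^{-1}(\balpha' - \tilde{\balpha}') + \sqrt{N}\big[(\overline{\K}_1 + N\lambda_1\I)^{-1} - (\overline{\LL}_1 + N\lambda_1\I)^{-1}\big]\tilde{\balpha}',
\]
so that $|f(\x)-\tilde{f}(\x)|$ is bounded by the sum of two terms $E_1$ (from perturbing the singular vector) and $E_2$ (from perturbing the inverse). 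Throughout I will use $\|\HH\mathbf{k}\| \le \|\mathbf{k}\| \le \sqrt{N}\,c$, which follows from Condition~\ref{cond:bound} (each entry satisfies $|k_x(\x_i,\x)| \le c$ by Cauchy--Schwarz) and $\|\HH\| = 1$, together with $\|(\overline{\K}_1 + N\lambda_1\I)^{-1}\| \le \frac{1}{N\lambda_1}$ and $\|\tilde{\balpha}'\| = 1$.

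The resolvent term $E_2$ is the routine one. Using the identity $\overline{\K}(\overline{\K}+N\lambda\I)^{-1} = \I - N\lambda(\overline{\K}+N\lambda\I)^{-1}$ (and likewise for $\overline{\LL}_1$), the difference of inverses equals $\frac{1}{N\lambda_1}$ times the quantity already controlled in Lemma~\ref{lem:error}; under the sampling of Theorem~\ref{thm:guarantee} this quantity is at most $\epsilon/2$, so $\|(\overline{\K}_1 + N\lambda_1\I)^{-1} - (\overline{\LL}_1 + N\lambda_1\I)^{-1}\| \le \frac{\epsilon}{2N\lambda_1}$. Multiplying by $\sqrt{N}\,\|\HH\mathbf{k}\| \le Nc$ and $\|\tilde{\balpha}'\|=1$ gives $E_2 \le \frac{c\epsilon}{2\lambda_1}$, which is exactly the $\tfrac12$ contribution.

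The singular-vector term $E_1$ is the crux, and I expect it to be the main obstacle. Here $E_1 \le \sqrt{N}\,\|\HH\mathbf{k}\|\,\|(\overline{\K}_1+N\lambda_1\I)^{-1}\|\,\|\balpha'-\tilde{\balpha}'\| \le \frac{c}{\lambda_1}\|\balpha'-\tilde{\balpha}'\|$, so everything reduces to controlling the perturbation of the leading left singular vector of $\T$. I would invoke Wedin's $\sin\theta$ theorem: choosing the sign of $\tilde{\balpha}'$ so that the angle $\theta$ between $\balpha'$ and $\tilde{\balpha}'$ lies in $[0,\pi/2]$, the residuals $(\T-\tilde{\T})\tilde{\bbeta}'$ and $(\T-\tilde{\T})^\top\tilde{\balpha}'$ each have norm at most $\|\T-\tilde{\T}\| \le \epsilon$, so their combination contributes a factor $\sqrt{2}$ in the numerator. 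For the denominator I would use Weyl's inequality together with Condition~\ref{cond:gap}: since $|\sigma_j(\T)-\sigma_j(\tilde{\T})| \le \|\T-\tilde{\T}\| \le r/2$, the separation $\sigma_1(\T)-\sigma_2(\tilde{\T}) \ge r - r/2 = r/2$ stays bounded away from zero, so Wedin applies and yields $\sin\theta \le \frac{\sqrt{2}\,\epsilon}{r/2} = \frac{2\sqrt{2}\,\epsilon}{r}$. Converting angle to vector distance via $\|\balpha'-\tilde{\balpha}'\| = 2\sin(\theta/2) \le 2\sin\theta$ (valid on $[0,\pi/2]$) then gives $\|\balpha'-\tilde{\balpha}'\| \le \frac{4\sqrt{2}\,\epsilon}{r}$ and hence $E_1 \le \frac{4\sqrt{2}}{r}\cdot\frac{c\epsilon}{\lambda_1}$.

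Adding $E_1$ and $E_2$ by the triangle inequality produces the claimed bound. The delicate points are all concentrated in $E_1$: correctly fixing the singular-vector sign, verifying through Weyl that the perturbed gap remains $\ge r/2$ so that Wedin's theorem is applicable, and being careful that the two loose conversions — the $\sqrt{2}$ from combining the left/right residuals and the factor $2$ from $2\sin(\theta/2)\le 2\sin\theta$ — are precisely what produce the constant $4\sqrt{2}$ rather than something smaller.
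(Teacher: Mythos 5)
Your proof is correct and arrives at the paper's bound with the same constants, but it handles the crux step with a different tool. The outer structure coincides with the paper's proof: the same telescoping of $\balpha-\tilde{\balpha}$ into a singular-vector term and a resolvent term, the same $\epsilon/(2N\lambda_1)$ bound on the difference of inverses inherited from Theorem~\ref{thm:guarantee} via $\overline{\K}_1(\overline{\K}_1+N\lambda_1\I)^{-1}=\I-N\lambda_1(\overline{\K}_1+N\lambda_1\I)^{-1}$, and the same Cauchy--Schwarz finish using $\norm{\mathbf{k}}\le\sqrt{N}c$ and $\norm{\HH}=1$. Where you invoke Wedin's $\sin\theta$ theorem (with the gap $r/2$ certified through Weyl's inequality and condition~\ref{cond:gap}) to get $\norm{\balpha'-\tilde{\balpha}'}\le 2\sin\theta\le\frac{4\sqrt{2}\,\epsilon}{r}$, the paper instead proves the singular-vector perturbation from scratch: it forms the symmetric dilation $\C=\bigl[\begin{smallmatrix}\0&\T\\ \T^\top&\0\end{smallmatrix}\bigr]$, expands the perturbed leading eigenvector $\tilde{\w}_1=[\tilde{\balpha}';\tilde{\bbeta}']$ in the eigenbasis $\{\w_i\}$ of $\C$, and bounds the tail coefficients by $\sum_{i\ge2}a_i^2\le\frac{4}{r^2}\norm{\T-\tilde{\T}}^2$ (following Smale and Zhou), arriving at the identical factor $\frac{4\sqrt{2}}{r}$. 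Your route is shorter if one is willing to cite Wedin, at the cost of having to instantiate the correct version of the theorem and its separation condition (which you do: $\sigma_1(\tilde{\T})\ge\sigma_2(\T)+r/2$, and the residuals $(\T-\tilde{\T})\tilde{\bbeta}'$, $(\T-\tilde{\T})^\top\tilde{\balpha}'$ supply the $\sqrt{2}\,\epsilon$ numerator); the paper's route is self-contained and makes the sign convention for $\tilde{\balpha}'$ and the provenance of each $\sqrt{2}$ explicit. As a side remark, both arguments are loose in the same place: replacing your $\norm{\balpha'-\tilde{\balpha}'}=2\sin(\theta/2)\le 2\sin\theta$ by the sharper $2\sin(\theta/2)\le\sqrt{2}\sin\theta$ (valid on $[0,\pi/2]$), or replacing the paper's triangle inequality on $(1-a_1)\w_1-\sum_{i\ge 2}a_i\w_i$ by orthogonality, would improve the constant from $\frac{4\sqrt{2}}{r}$ to $\frac{4}{r}$.
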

\begin{proof} The proof consists of three steps.

  \textbf{Step I.} We first bound the perturbation in the unit singular vectors, \ie, $\norm{\balpha'-\tilde{\balpha}'}$. % This is achieved by bounding the perturbation in the concatenated vectors $\norm{ [\balpha'; \bbeta'] - [\tilde{\balpha}'; \tilde{\bbeta}']}$. 

  Let the complete set of left singular vectors of $\T$ be $\{\uu_i\}_{i=1}^N$ (with $\uu_1=\balpha'$), and the complete set of right singular vectors be $\{\vv_i\}_{i=1}^N$ (with $\vv_1=\bbeta'$), both of which constitute orthonormal basis sets of $\bbR^N$.
  Observe that the set of concatenated vectors $\{\w_i\}_{i=1}^{2N}$ with $\w_i=[\uu_i; \vv_i]$ for $i=1,\dots,N$, and $\w_{N+i}=[\uu_{N-i+1}; -\vv_{N-i+1}]$ for $i=1,\dots,N$, are eigenvectors of 
  % \begin{gather*}
  $    \C:=\left[
    \begin{array}{cc}
      \0 & \T \\
      \T^\top & \0
    \end{array}\right] $,
  % \end{gather*}
  with corresponding eigenvalues (in descreasing order)
  \begin{gather*}
    \theta_1=\sigma_1 (\T), \quad \dots, \quad \theta_N=\sigma_N (\T), \\
    \theta_{N+1}  = - \sigma_N (\T), \quad \dots, \quad \theta_{2N}= - \sigma_1 (\T).
  \end{gather*}
  Define also
  % \begin{gather*}
  $    \tilde{\C}:=\left[
    \begin{array}{cc}
      \0 & \tilde{\T} \\
      \tilde{\T}^\top & \0
    \end{array}\right] $
  % \end{gather*}
  and its set of eigenvectors $\{\tilde{\w}_i\}_{i=1}^{2N}$ similarly, with $\tilde{\w}_1=[\tilde{\balpha}'; \tilde{\bbeta}']$, and corresponding eigenvalues $\{ \tilde{\theta}_i \}_{i=1}^{2N}$. The rest of this step is similar to~\citet[Prop.~2]{SmaleZhou09a}.

  Note that $\{\w_i\}_{i=1}^{2N}$ constitute an orthogonal basis of $\bbR^{2N}$. Thus, we can write $\tilde{\w}_1$ as a linear combination of this basis: 
  \begin{gather*}
    \tilde{\w}_1=a_1 \w_1 + \dots + a_{2N} \w_{2N},
  \end{gather*}
  with $\sum_{i=1}^{2N} a_i^2 = 1$ since all vectors in the equality have equal length of $\sqrt{2}$. 

  Consider $\tilde{\C} \tilde{\w}_1 - \C \tilde{\w}_1$. It can be written as
  \begin{align*}
    \tilde{\C} \tilde{\w}_1 - \C \tilde{\w}_1 & = \tilde{\theta}_1 \tilde{\w}_1 - \sum_{i=1}^{2N} a_i \C \w_i  = \sum_{i=1}^{2N}  (\tilde{\theta}_1 - \theta_i) a_i \w_i.
  \end{align*}
  For $i>1$, according to condition~\ref{cond:gap}, we have 
  \begin{align*}
    \abs{ \tilde{\theta}_1 - \theta_i } & \ge \abs{ \theta_1 - \theta_i } - \abs{  \theta_1 - \tilde{\theta}_1 } \ge \abs{ \theta_1 - \theta_2 } - \abs{  \theta_1 - \tilde{\theta}_1 } \ge \frac{r}{2}.
  \end{align*}
  As a result,
  \begin{align} \label{e:cw}
    \norm{ \tilde{\C} \tilde{\w}_1 - \C \tilde{\w}_1 }^2 = 2 \sum_{i=1}^{2N} ( \tilde{\theta}_1 - \theta_i )^2 a_i^2  \ge 2 \sum_{i=2}^{2N} ( \tilde{\theta}_1 - \theta_i )^2 a_i^2 \ge \frac{r^2}{2} \sum_{i=2}^{2N} a_i^2 .
  \end{align}
  On the other hand, we must have
  \begin{gather*}
    \norm{ \tilde{\C} \tilde{\w}_1 - \C \tilde{\w}_1 }^2 \le \norm{ \tilde{\C} - \C }^2 \norm{\tilde{\w}_1}^2 = {2} \norm{ \tilde{\C} - \C }^2.
  \end{gather*}
  Also, $\norm{\tilde{\C}-\C}=\norm{\tilde{\T}-\T}$ %= \max \left\{ \norm{\tilde{\T}-\T}, \norm{\tilde{\T}^\top-\T^\top} \right\}
  due to the block structure of $\tilde{\C}$ and $\C$. Thus from \eqref{e:cw} we obtain
  \begin{gather} \label{e:partial-sum-squares}
    \sum_{i=2}^{2N} a_i^2  \le \frac{4}{r^2} \norm{\tilde{\T}-\T}^2.
  \end{gather}
  In view of the inequality $1-x \le \sqrt{1-x^2}$ for $x \in [0,1]$, we also have\footnote{Without loss of generality, we assume $a_1=\tilde{\w}_1^\top \w_1 /2 \ge 0$. Otherwise, we can set $\tilde{\w}=[-\tilde{\balpha}'; -\tilde{\bbeta}']$ as $-\tilde{\balpha}'$ and $-\tilde{\bbeta}'$ are also a valid left/right singular vector pair of $\tilde{\T}$.} 
  \begin{gather}\label{e:a1}
    1 - a_1 \le \sqrt{1-a_1^2} = \sqrt{\sum_{i=2}^{2N} a_i^2}.
  \end{gather}

  Now we can bound the perturbation
  \begin{align*}
    \norm{\balpha'-\tilde{\balpha}'} & \le \norm{\w_1-\tilde{\w}_1} \\
    & = \norm{ (1-a_1) \w_1 - \sum_{i=2}^{2N} a_i \w_i} \\
    & \le \norm{(1-a_1) \w_1} +  \norm{\sum_{i=2}^{2N} a_i \w_i} \\
    & = \sqrt{2} \left( \abs{1-a_1} + \sqrt{\sum_{i=2}^{2N} a_i^2}  \right) \\
    & \le 2\sqrt{2}  \sqrt{\sum_{i=2}^{2N} a_i^2} \le \frac{4\sqrt{2}}{r} \norm{\tilde{\T}-\T},
  \end{align*}
  where we used that $\balpha'-\tilde{\balpha}'$ is a subvector of $\w_1-\tilde{\w}_1$, the triangle inequality, \eqref{e:a1} and \eqref{e:partial-sum-squares} in the four inequalities.

  \textbf{Step II.} We then bound the perturbation in the actual coefficients in $f$ and $\tilde{f}$, \ie, $\norm{\balpha-\tilde{\balpha}}$. We have
  \begin{align*}
    \frac{\norm{\balpha-\tilde{\balpha}}}{\sqrt{N}} &= \norm{ (\overline{\K}_1 + N \lambda_1 \I)^{-1} \balpha' - (\overline{\LL}_1 + N \lambda_1 \I)^{-1} \tilde{\balpha}' } \\
    & \le \norm{(\overline{\K}_1 + N \lambda_1 \I)^{-1} (\balpha'-\tilde{\balpha}')} + \norm{ \left( (\overline{\K}_1 + N \lambda_1 \I)^{-1} -  (\overline{\LL}_1 + N \lambda_1 \I)^{-1} \right) \tilde{\balpha}' } \\
    & \le \norm{(\overline{\K}_1 + N \lambda_1 \I)^{-1}} \norm{ \balpha'-\tilde{\balpha}' } + \norm{ (\overline{\K}_1 + N \lambda_1 \I)^{-1} -  (\overline{\LL}_1 + N \lambda_1 \I)^{-1} } \norm{\tilde{\balpha}'}.
  \end{align*}
  where the triangle inequality is used in the first inequality.
  
We have already shown that
  \begin{gather*}
    N \lambda_1 \norm{ (\overline{\K}_1 + N \lambda_1 \I)^{-1} -  (\overline{\LL}_1 + N \lambda_1 \I)^{-1} } = \norm{ \overline{\K}_1 (\overline{\K}_1 + N \lambda_1 \I)^{-1} - \overline{\LL}_1 (\overline{\LL}_1 + N \lambda_1 \I)^{-1} } \le \epsilon/2
  \end{gather*}
  in Theorem~\ref{thm:guarantee} using the stated sampling strategy. 

  Using the facts $\norm{ (\overline{\K}_1 + N \lambda_1 \I)^{-1} } \le \frac{1}{N \lambda_1}$, $\norm{ \tilde{\balpha}' }=1$, and the bound of $\norm{\balpha'-\tilde{\balpha}'}$ from Step I, we have
  \begin{align*}
    \frac{\norm{\balpha-\tilde{\balpha}}}{\sqrt{N}} \le \frac{1}{N \lambda_1} \frac{4\sqrt{2}}{r} \epsilon + \frac{1}{N \lambda_1} \frac{\epsilon}{2} = \left( \frac{1}{2} + \frac{4\sqrt{2}}{r} \right) \frac{\epsilon}{N \lambda_1}.
  \end{align*}

  \textbf{Step III.} We now bound the kernel stability. By the Cauchy-Schwarz inequality, we have 
  \begin{align*}
    \abs{ f(\x) - \tilde{f}(\x) } & = \abs{ \mathbf{k}^\top \HH (\balpha-\tilde{\balpha}) } \le \norm{ \mathbf{k} } \norm{ \HH (\balpha-\tilde{\balpha}) } \le \norm{ \mathbf{k} } \norm{ \balpha-\tilde{\balpha} },
  \end{align*}
  where we have used $\norm{\HH}=1$ in the last inequality.
  Since $\mathbf{k}$ is an $N$-dimensional vector and each entry is bounded by $c$ (condition~\ref{cond:bound}), we have $\norm{\mathbf{k}} \le \sqrt{N} c$. Combining the bound of $\norm{ \balpha-\tilde{\balpha} }$ from Step II, we conclude the proof.
\end{proof}

\paragraph{Remarks} Theorem~\ref{thm:stability} can be extended to the case where $L$-dimensional projections are sought for $L>1$, by assuming a gap at the $L$-th singular value of $\T$. Our proof technique is general and works for other low-rank KCCA algorithms, if only a bound on $\norm{\T - \tilde{\T}}$ is available. With Theorem~\ref{thm:stability}, one may show the consistency of NKCCA, by controlling kernel stability to be much smaller than the generalization bound of exact KCCA~\citep{Fukumiz_07a}. 

We note in passing that, unlike kernel ridge regression~\citep{Rudi_15a}, KCCA does not have a representer theorem where $f$ is restricted to the selected landmarks, as the centering operation in RKHS already involves all training samples. When the training set is large, it is worth exploring approximation strategies (see, \eg, \citealp{Hsieh_14a} and the references therein) to speed up prediction.

\section{Related work}
\label{s:related}

There is a rich literature for the Nystr\"om method, exploring the sampling strategies and approximation quality\linebreak[4] \citep{DrineasMahoney05a,Zhang_08e,Gitten11a,Farahat_11a,Kumar_12a,WangZhang13a,GittenMahoney13a}, applications for large-scale kernel machines~\citep{WilliamSeeger00a,Fowlkes_04a,Platt05a,ZhangKwok10a,Li_10b}, and generalization performance of kernel machines with Nystr\"om kernel approximations~\citep{Cortes_10b,Yang_12e,Jin_13a,Bach13d,AlaouiMahoney15a,Rudi_15a}.

The ridge leverage scores are closely related to the \emph{leverage scores relative to the best rank-$k$ space}\linebreak[4] \citep{GittenMahoney13a}, as the $\frac{\sigma_j (\K)}{\sigma_j (\K) + N \gamma}$ terms in \eqref{e:leverage-scores} implement ``soft shrinkage'' of the spectrum of $\K$. In general, leverage scores measure the extent to which each sample ``stands out'', and are useful in a wide range of fields~\citep{Mahoney11a}.

\citet{Avron_13a} and~\citet{Paul15a} have recently proposed randomized sample selection algorithms to scale up linear CCA. Their goal is to select a subset of \emph{paired} examples, whose canonical correlation well approximates that of the entire training set. The algorithm of~\citet{Avron_13a} first homogenizes the importance of each sample via a structured random projection of the training set and then uses uniform sampling on top; the algorithm of~\citet{Paul15a} uses leverage scores of certain (carefully constructed) matrix for sampling. Note that in KCCA, we are not constrained to sample data pairs; we can approximate the kernel of each view using independent columns. On the other hand, theoretically, there is little distinction between random projection and column selection: The matrices $\SS_1$ and $\SS_2$ can be implementing random projections instead of random column selection, as long as they possess certain spectral properties~\citep{GittenMahoney13a}.

Another approach to large-scale kernel machines is to use random Fourier features~\citep{RahimiRecht08a,RahimiRecht09a}. \citet{Lopez_14b} map original inputs of each view to high-dimensional random feature spaces and then run linear CCA on top to approximate KCCA. While random Fourier features are data-independent and efficient to generate, the performance of this approach tends to be worse than that of the Nystr\"om method for the same approximation rank~\citep{Lopez_14b,Wang_15b,Xie_15a}. A key difference between the two approaches is that random Fourier features are designed to approximate the kernel functions, while the Nystr\"om method aims to approximate the kernel matrix~\citep{Yang_12e}.

\section{Experiments}
\label{s:expt}

\begin{figure}[t]
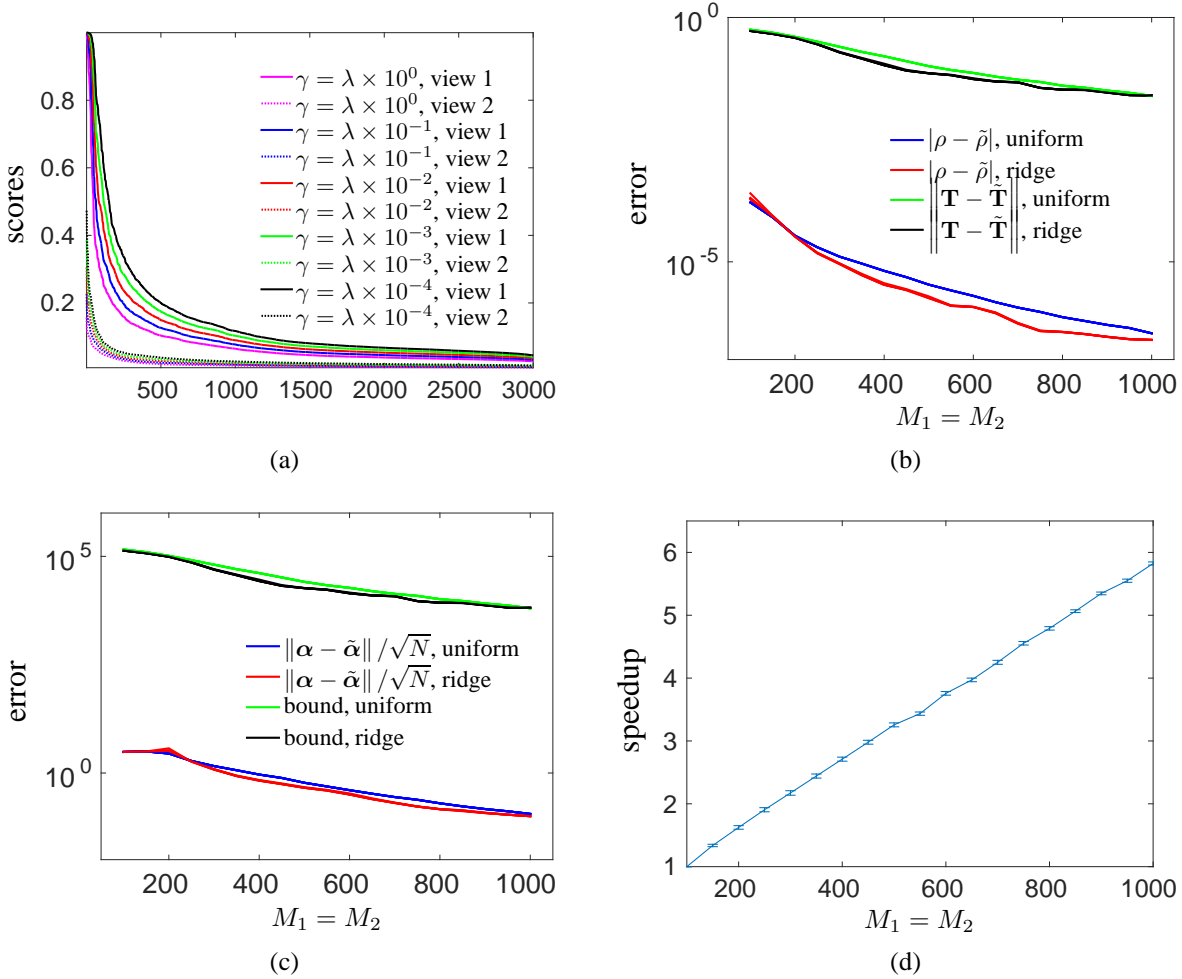

\centering
\psfrag{scores}[bc][c][1.2]{scores}
\psfrag{N}[c][c][.6]{}
\psfrag{error}[bc][c][1.2]{error}
\psfrag{M}[c][c][1.0]{$M_1=M_2$}
\begin{tabular}{@{}c@{\hspace{0.05\linewidth}}c@{}}
\psfrag{gamma0, view 1}[l][l][0.9]{$\gamma=\lambda\times 10^{0}$, view 1}
\psfrag{gamma0, view 2}[l][l][0.9]{$\gamma=\lambda\times 10^{0}$, view 2}
\psfrag{gamma1, view 1}[l][l][0.9]{$\gamma=\lambda\times 10^{-1}$, view 1}
\psfrag{gamma1, view 2}[l][l][0.9]{$\gamma=\lambda\times 10^{-1}$, view 2}
\psfrag{gamma2, view 1}[l][l][0.9]{$\gamma=\lambda\times 10^{-2}$, view 1}
\psfrag{gamma2, view 2}[l][l][0.9]{$\gamma=\lambda\times 10^{-2}$, view 2}
\psfrag{gamma3, view 1}[l][l][0.9]{$\gamma=\lambda\times 10^{-3}$, view 1}
\psfrag{gamma3, view 2}[l][l][0.9]{$\gamma=\lambda\times 10^{-3}$, view 2}
\psfrag{gamma4, view 1}[l][l][0.9]{$\gamma=\lambda\times 10^{-4}$, view 1}
\psfrag{gamma4, view 2}[l][l][0.9]{$\gamma=\lambda\times 10^{-4}$, view 2}
\includegraphics[width=0.45\linewidth]{toy_scores.eps}
&
\psfrag{uniform rho-rho}[l][l][0.9]{$\abs{ \rho - \tilde{\rho} }$, uniform}
\psfrag{ridge rho-rho}[l][l][0.9]{$\abs{ \rho - \tilde{\rho} }$, ridge}
\psfrag{uniform T-T}[l][l][0.9]{$\norm{ \T - \tilde{\T} }$, uniform}
\psfrag{ridge T-T}[l][l][0.9]{$\norm{ \T - \tilde{\T} }$, ridge}
\includegraphics[width=0.45\linewidth]{toy_error_D.eps} \\
(a) & (b) \\[3ex]
\psfrag{uniform alpha}[l][l][0.9]{$\norm{ \balpha - \tilde{\balpha} }/\sqrt{N}$, uniform}
\psfrag{ridge alpha}[l][l][0.9]{$\norm{ \balpha - \tilde{\balpha} }/\sqrt{N}$, ridge}
\psfrag{uniform T-T}[l][l][0.9]{bound, uniform}
\psfrag{ridge T-T}[l][l][0.9]{bound, ridge}
\includegraphics[width=0.45\linewidth]{toy_error_alpha.eps} &
\psfrag{speedup}[bl][l][1.2]{speedup}
\includegraphics[width=0.45\linewidth]{toy_speedups.eps}
\\
(c) & (d)
\end{tabular}
\caption{Results on synthetic data. (a) $\gamma$-ridge leverage scores (in sorted order) for each view. (b) $\abs{ \rho - \tilde{\rho} }$ and $\norm{ \T - \tilde{\T} }$. (c) $\frac{\norm{\balpha-\tilde{\balpha}}}{\sqrt{N}}$ vs. the bound $\left( \frac{1}{2} + \frac{4\sqrt{2}}{r} \right) \frac{\norm{\T-\tilde{\T}}}{N \lambda_1}$. (d) Speedups achieved by our incremental algorithm at intermediate ranks. Curves in (b)(c)(d) show averaged results over 20 random seeds.}
\label{f:synthetic}
\end{figure}

\paragraph{Synthetic dataset} We first verify our theoretical results on the synthetic dataset used by~\citet{Fukumiz_07a}. The input data are generated as follows
\begin{gather*}
Z_i \sim \text{Uniform}[0,1], \\
\eta^x_i \sim \calN(0,\, 0.02), \quad U_i=Z_i+0.06+\eta^x_i,\\
\eta^y_i \sim \calN(0,\, 0.03), \quad  V_i=Z_i+3+\eta^y_i, \\
R^x_i = \sqrt{-4 \log (U_i/1.5)}, \quad  \theta^x_i \sim \text{Uniform}[0,2\pi], \\
R^y_i = \sqrt{-4 \log (V_i/4.1)}, \quad  \theta^y_i \sim \text{Uniform}[0,2\pi], \\
\x_i=[R^x_i \cos \theta^x_i, R^x_i \sin \theta^x_i]^\top,\ \y_i=[R^y_i \cos \theta^y_i, R^y_i \sin \theta^y_i]^\top.
\end{gather*}
We sample $N=3000$ points as the training set and another $3000$ points as the tuning set. We perform exact KCCA and NKCCA with both uniform sampling and ridge leverage scores sampling on the training set to compute the first canonical correlation, and compare their approximation error $\abs{ \tilde{\rho} -\rho }$. Gaussian kernel widths and regularization parameters $(\lambda_1, \lambda_2)$ are selected for best canonical correlation on the tuning set with KCCA. Exact $\gamma$-ridge leverage scores are computed at $\gamma=\lambda\times \{10^{-4}, 10^{-3}, 10^{-2}, 10^{-1}, 1\}$ for both views and used to compute the sampling probabilities~\eqref{e:sampling}, and they all lead to very similar results on this dataset. We plot the $\gamma$-ridge leverage scores (in sorted order) for each view in Figure~\ref{f:synthetic}(a). Observe that the scores for view 1 are less uniformly distributed. We show the approximation error for a wide range of ranks ($M_1=M_2$) from 100 to 1000 in Figure~\ref{f:synthetic}(b), and show in (c) the kernel stability and specifically the result of Theorem~\ref{thm:stability} step \textbf{II} (bound in step \textbf{III} can be loose), \ie, $\frac{\norm{\balpha-\tilde{\balpha}}}{\sqrt{N}}$ vs. $\left( \frac{1}{2} + \frac{4\sqrt{2}}{r} \right) \frac{\norm{\T-\tilde{\T}}}{N \lambda_1}$. We see that the actual approximation errors in canonical correlation and kernel stability are smaller than the theoretical bounds, with non-uniform sampling performing somewhat better than uniform sampling.

To demonstrate the efficiency of our incremental Nystr\"om approximation algorithm for NKCCA, we run it for up to $M_1=M_2=1000$ and output the solutions at ranks $\{100, 150, 200, \dots, 1000\}$. At each intermediate rank, We compare the run time of the incremental approach with the total run time of the non-incremental algorithm (starting fresh for each smaller rank), and plot the speedups in Figure~\ref{f:synthetic}(d). It is clear that the speedup improves as the number of intermediate test points increases.

\vspace*{-.5ex}
\paragraph{Acoustic-articulatory data} We next experiment with a subset of the University of Wisconsin X-Ray Microbeam corpus ~\cite{Westbur94a}, which consists of simultaneously recorded acoustic and articulatory measurements during speech. The acoustic view inputs are 39D mel-frequency cepstral coefficients and the articulatory view inputs are horizontal/vertical displacements of 8 pellets attached to different parts of the vocal tract, each then concatenated over a 7-frame context window. As in~\cite{Lopez_14b}, we reshuffle all frames of speaker `JW11', and split them into 30K/10K/10K frames for training/tuning/test. The projection dimensionality $L$ is set to 112, which is also the maximum possible total canonical correlation. 

We compare the randomized KCCA (RCCA) algorithm of~\cite{Lopez_14b} using random Fourier features, NKCCA with uniform sampling, and NKCCA with ridge leverage scores sampling. Gaussian kernel widths and regularization parameters are selected on the tuning set. We compute approximate $\gamma$-ridge leverage scores using the algorithm in~\citet[Sec.~3.5]{AlaouiMahoney15a} with $5000$ randomly sampled columns. $\gamma$ is tuned over $\lambda\times \{10^{-2}, 10^{-1}, 1, 10^1\}$ and $\gamma=\lambda$ is chosen. Total canonical correlations achieved by different algorithms on the test set, averaged over $5$ random seeds, are reported in Table~\ref{t:jw11} for several ranks $(M_1=M_2)$. We observe that NKCCAs outperform RCCA with the same approximation rank, which is consistent with previous work, and sampling based on approximate ridge leverage scores improves over uniform sampling by a small margin.

\begin{table}[t]
\centering
\caption{Total canonical correlations ($\uparrow$) achieved by different approximate KCCA algorithms on JW11 test set with various ranks.}
\label{t:jw11}
\begin{tabular}{|c||r|r|r|r|r|r|}\hline
rank $M_1=M_2$& $1000$  & 2000 & 3000 & 4000 & 5000 \\ \hline
RCCA          & 69.4    & 82.6 & 88.8 & 92.9 & 95.8 \\ \hline
NKCCA unif.    & 78.5    & 89.4 & 94.6 & 98.4 & 100.8 \\ \hline
NKCCA ridge    & \textbf{79.6}    & \textbf{90.5} & \textbf{95.7} & \textbf{99.2} & \textbf{101.5} \\ \hline
\end{tabular}
\end{table}

\vspace*{-.5ex}
\paragraph{Discussion} Empirically, we find that non-uniform sampling based on ridge leverage scores provides moderate improvement over uniform sampling. Several factors are at play. First, ridge leverage scores are data-dependent quantities, and theoretically there is a clear advantage to use non-uniform sampling if the ridge leverage scores are quite non-uniform. Second, for large datasets, it is computationally infeasible to compute exact ridge leverage scores, and approximating them results in sub-optimal performance. Third, although ridge leverage scores provide better approximation for kernel ridge regression where a single kernel is involved, the interplay between the two kernels is also crucial in KCCA. This interplay is not fully taken into account by our analysis (\eg, the upper bound in~\eqref{e:error-two-views} consists of error terms for each view individually) and our sampling strategy. It is an interesting future direction to study sampling strategies that use information from the other view, which may lead to improved approximation accuracy.
\vspace*{-1ex}
\section{Conclusions}
\label{s:conclusion}

We have proposed a non-uniform sampling strategy for column selection in approximate KCCA using the Nystr\"om method, proved its approximation guarantees for both training (error in canonical correlation) and testing (kernel stability), and also developed an algorithm for computing solutions incrementally at different approximation ranks.

It has been shown by~\citet{Bach13d,AlaouiMahoney15a} that for kernel ridge regression, which has a closed-form solution (as does KCCA), it is possible to derive a sharper generalization bound directly using the solution with low-rank kernel approximations, rather than indirectly through the kernel stability~\citep{Cortes_10b}. Therefore, another interesting problem is to develop a sharp statistical performance guarantee of NKCCA based on known results for exact KCCA~\citep{Fukumiz_07a}.

%% our own large scale stochastic KCCA work in related work

\appendix

\section{Proof of Lemma~\ref{lem:error}}
 \begin{proof} We first prove the bound for uncentered kernel matrices. Notice that $\K (\K + N \lambda \I)^{-1}=\I - N \lambda  (\K + N \lambda \I)^{-1}$. Therefore, for any $\gamma \ge 0$,
   \begin{align}
     & \norm{ \K (\K + N \lambda \I)^{-1} - \LL_{\gamma} (\LL_{\gamma} + N \lambda \I)^{-1} } \nonumber \\
     =\; & \norm{ \left( \I - N \lambda (\K + N \lambda \I)^{-1} \right)  - \left( \I - N \lambda (\LL_{\gamma} + N \lambda \I)^{-1} \right) } \nonumber \\
     =\; & N \lambda \norm{  (\K + N \lambda \I)^{-1}  -  (\LL_{\gamma} + N \lambda \I)^{-1} } \nonumber \\
     =\; &  N \lambda \norm{ (\K + N \lambda \I)^{-1} ( \LL_{\gamma} - \K ) (\LL_{\gamma} + N \lambda \I)^{-1} } \nonumber \\
     \le \; & N \lambda \norm{ (\K + N \lambda \I)^{-1}} \norm{ \K - \LL_{\gamma} } \norm{ (\LL_{\gamma} + N \lambda \I)^{-1} } \nonumber \\ \label{e:lemma-error-1}
     \le \; & \frac{1}{N \lambda} \norm{ \K - \LL_{\gamma} } 
   \end{align}
   where we have used the fact $\norm{(\K + N\lambda \I)^{-1}}\le \frac{1}{N\lambda}$ in the last inequality.

   For $\gamma>0$, we can use $\K - \LL_{\gamma} \preceq \frac{N\gamma}{1-t} \I$ from Lemma~1 to obtain the desired bound. On the other hand, we observe that $\K - \LL \preceq \K - \LL_{\gamma}$ for any $\gamma>0$ owing to $\LL_{\gamma} \le \LL$ from Lemma~1. Therefore $\norm{\K - \LL} \le \norm{\K - \LL_{\gamma}}$ and the same bound holds for $\LL$.

   Now, for centered matrices, a derivation similar to that of \eqref{e:lemma-error-1} follows and we just need to bound $\norm{\overline{\K}-\overline{\LL}_\gamma}$. By definitions of the centered matrices and conjugating the PSD ordering~\citep[Obs. 7.7.2]{HornJohnson86a}, we have
   \begin{gather}
     \overline{\K} - \overline{\LL} \preceq \overline{\K} - \overline{\LL}_{\gamma} \preceq \frac{N\gamma}{1-t} \HH^2 \preceq \frac{N\gamma}{1-t} \I
   \end{gather}
   where the last step is due to the fact that eigenvalues of $\HH$ are in $\{0,1\}$. Thus the reasoning for uncentered matrices also holds here.
 \end{proof}

\bibliographystyle{abbrvnat}
\bibliography{icml16a}

\end{document}